\newcommand{\warning}[1]{\textcolor[RGB]{0, 0, 0}{#1}}
\title{\LARGE \bf
A$^2$I-Calib: An Anti-noise Active Multi-IMU Spatial-temporal Calibration Framework for Legged Robots
}
\author{Chaoran Xiong$^{\dag}$, ~\IEEEmembership{Student~Member,~IEEE}, Fangyu Jiang$^{\dag}$, Kehui Ma, Zhen Sun, \\ Zeyu Zhang, and Ling Pei$^{\ast}$, ~\IEEEmembership{Senior~Member,~IEEE} 
\thanks{$^{\dag}$Chaoran Xiong and Fangyu Jiang contribute equally to this work.}%
\thanks{$^{\ast}$Corresponding author: Ling Pei.}
\thanks{This work was supported in part by National Nature Science Foundation of China (NSFC) (Grant Number: 62273229), and in part by Science and Technology Commission of Shanghai Municipality (Grant Number: 24TS1402600 and 24TS1402800).}
\thanks{The authors are with the Shanghai Jiao Tong University, Shanghai 200240,
China (e-mail: sjtu4742986; jiangfangyu; khma0929; zhensun; zhang-zeyu; ling.pei@sjtu.edu.cn)}%
\thanks{The code will be released at {https://github.com/DavidGrayrat/A2I-Calib}.}%
}
\newtheorem{theorem}{Theorem}  
\begin{document}

\maketitle
\thispagestyle{empty}
\pagestyle{empty}

\begin{abstract}

Recently, multi-node inertial measurement unit (IMU)-based odometry for legged robots has gained attention due to its cost-effectiveness, power efficiency, and high accuracy. However, the spatial and temporal misalignment between foot-end motion derived from forward kinematics and foot IMU measurements can introduce inconsistent constraints, resulting in odometry drift. Therefore, accurate spatial-temporal calibration is crucial for the multi-IMU systems. Although existing multi-IMU calibration methods have addressed passive single-rigid-body sensor calibration, they are inadequate for legged systems. This is due to the insufficient excitation from traditional gaits for calibration, and enlarged sensitivity to IMU noise during kinematic chain transformations. To address these challenges, we propose A$^2$I-Calib, an anti-noise active multi-IMU calibration framework enabling autonomous spatial-temporal calibration for arbitrary foot-mounted IMUs. Our A$^2$I-Calib includes: 1) an anti-noise trajectory generator leveraging a proposed basis function selection theorem to minimize the condition number in correlation analysis, thus reducing noise sensitivity, and 2) a reinforcement learning (RL)-based controller that ensures robust execution of calibration motions. Furthermore, A$^2$I-Calib is validated on simulation and real-world quadruped robot platforms with various multi-IMU settings, which demonstrates a significant reduction in noise sensitivity and calibration errors, thereby improving the overall multi-IMU odometry performance. 

\end{abstract}

\section{INTRODUCTION}
Real-time, low-power, and high-precision state estimation is the foundation for the locomotion control, velocity tracking\cite{MIT}, and path planning\cite{Planning} for legged robots in the embodied navigation systems. Recently, multi-IMU-based odometry has emerged as a promising solution due to its low cost and energy efficiency\cite{MIMC-VINS, MIPH, MIPO}. 

Typically, the multi-IMU systems for legged robots leverage the foot IMU measurements to augment base state estimation through kinematic constraints\cite{MIPO, MIPH}. This approach enhances estimation accuracy by fusing proprioceptive measurements from multiple foot-end nodes with traditional body-IMU propagation.
However, the spatial-temporal misalignment between IMU mounting positions and forward kinematics-derived foot-end motions introduces inconsistent measurement constraints, thus leading to significant odometry drift. Therefore, accurate spatial-temporal calibration is crucial for the multi-IMU systems.
\begin{figure}
    \centering
    \includegraphics[width=1.0\linewidth]{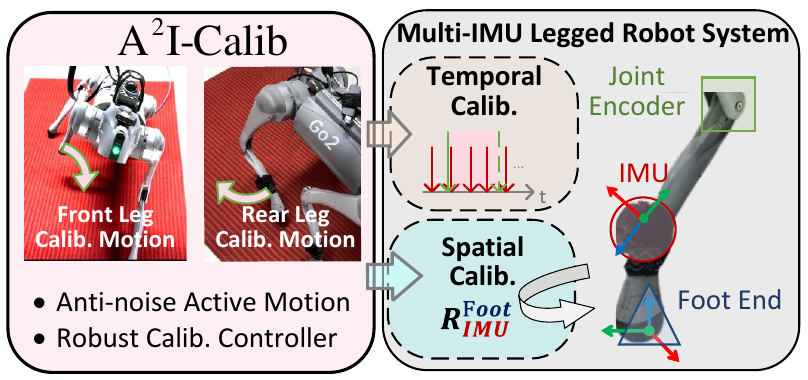}
    \caption{A$^2$I-Calib is a novel spatial-temporal calibration framework for arbitrary foot-mounted IMUs. In this framework, a basis function selection theorem is proposed to minimize the condition number in calibration correlation analysis, thereby reducing noise sensitivity. Moreover, a RL-based motion controller is designed to implement the optimal calibration trajectory. Finally, more precise calibration results from A$^2$I-Calib improve the overall multi-IMU odometry performance.}
    \label{fig:abstract}
    \vspace{-0.5cm}
\end{figure}
Existing multi-IMU extrinsic calibration methods have addressed single-rigid-body sensor calibration\cite{qiu2021, iKalibr, EC-MI}. These solutions employ external sensors such as vision\cite{OMIC} and LiDAR\cite{ML-MI}, or temporal correlation analysis\cite{qiu2021} for multi-IMU calibration. Though effective on single-rigid-body multi-IMU platforms such as drones, they are inadequate for legged systems due to two critical limitations: 1) insufficient excitation from conventional gaits for calibration algorithms, and 2) enlarged sensitivity to IMU noise during kinematic chain transformations. Moreover, these calibration frameworks all require manual intervention and demonstrate suboptimal performance under the typical locomotions of the legged robots.

To address these challenges, we present A$^2$I-Calib, an active anti-noise calibration framework that enables fully autonomous spatial-temporal calibration of arbitrary foot-mounted IMUs, as shown in Fig. \ref{fig:abstract}. Our A$^2$I-Calib framework includes: 1) A trajectory generation module employing a proposed basis function selection theorem to minimize condition numbers in correlation analysis, thereby suppressing noise sensitivity; and 2) A reinforcement learning (RL)-based calibration action controller ensuring robust execution of calibration-specific motions. Comprehensive validations on simulation and real-world platforms demonstrate the capability of A$^2$I-Calib to achieve high-precision spatial-temporal calibration. Our main contributions are as follows:
\begin{enumerate}
    \item An anti-noise active multi-IMU calibration framework, namely A$^2$I-Calib, which enables autonomous spatial-temporal calibration for arbitrary foot-mounted IMUs.
    \item A basis function selection theorem for the anti-noise trajectory generation algorithm to minimize the \warning{condition number, thereby reducing} noise sensitivity. To execute the designed optimal trajectory, a RL-based robust calibration action controller is introduced with rewards designed for tripod balancing and stable calibration action implementation. 
    \item Comprehensive validations on the simulation and real-world quadruped robot platforms with various multi-IMU settings, demonstrating a significant reduction in error sensitivity and calibration errors, and an enhancement in multi-IMU odometry accuracy. 
\end{enumerate}

The remainder of this paper is organized as follows: The related work on multi-IMU calibration algorithms is reviewed in Section II. Section III describes the formulation of spatial-temporal misalignment and noise sensitivity for foot IMU calibration. Section IV introduces the two key modules designed for A$^2$I-Calib. Section V presents a comparison of A$^2$I-Calib with other calibration motions on simulation and real-world quadruped robot platforms with various IMU settings, including calibration accuracy and its influence on multi-IMU odometry performance. Finally, the conclusion is given in Section VI .

\section{RELATED WORK}
In legged robot platforms, multi-IMU systems are commonly used for real-time state estimation\cite{MIPH, MIPO}. In such systems, the body IMU propagates the base state, while the foot IMUs, combined with forward kinematics, conduct the error correction. To temporally and spatially align the foot IMU data with forward kinematics, it is necessary to calibrate the rotational extrinsic parameters and time offset between the foot IMU and the foot-end motion computed by the joint encoders through forward kinematics.

Existing multi-IMU calibration algorithms can be categorized into external sensor-assisted methods and IMU-only trajectory correlation-based calibration methods. As for external sensor-assisted methods, Rehder et al.\cite{E-Kalibr} first extended the well-known calibration toolbox Kalibr to support single camera-aided multi-IMU extrinsic calibration. This method requires the visual chessboards for calibration objective function construction. Similarly, Li et al.\cite{ML-MI} employed LiDAR to assist multi-IMU calibration. In addition, an online calibration method with visual-inertial odometry was proposed in \cite{OMIC}. On the other hand,  IMU-only trajectory correlation-based calibration methods were presented in \cite{qiu2021,EC-MI}. These algorithms were based on the correlation and covariance between IMU trajectories. Though the trajectory correlation-based methods do not rely on external sensors, the calibration accuracy depends heavily on the motion trajectories.

To date, current multi-IMU calibration algorithms are based on passive data collection from a single rigid body. However, for legged robot multi-IMU systems, ideal constraints of a single rigid body need to be substituted with forward kinematics, which amplifies the inherent Gaussian white noise of the IMUs. This process significantly affects the accuracy of the final spatial-temporal calibration. The existing gaits of legged robots\cite{Gait1,Gait2} are insufficiently excited for the calibration purpose and the motion trajectories enlarges the amplification of IMU noise. As a result, passive data collection methods based on existing gaits perform poorly in multi-IMU calibration for legged robots. Therefore, a new anti-noise active legged foot IMU calibration algorithm is needed to improve calibration accuracy and robustness.

\section{PROBLEM FORMULATION}

In this section, an overview of the foot IMU calibration problem and essential mathematical theories are provided. Section II-A formulates the problems of IMU foot-end extrinsic rotation and IMU-joint encoder time offset calibration. Then the noise sensitivity description for foot IMU calibration is presented in Section II-B.

\subsection{IMU Foot-end Spatial Temporal Calibration}
In the multi-IMU systems for the legged robots, the foot IMU is typically used to observe the state of the robot's foot end, such as the rolling velocity and slip detection when robot's paws touch the ground. This information is then used to correct the robot's base state. To obtain the state of the foot end, it is essential to calibrate the rotation matrix between the foot IMU and the foot-end coordinate system derived from forward kinematics, as defined by
\begin{equation}
    \omega^{F}_n = \mathbf{R}_I^F {\omega}^I_n,
\end{equation}
where $\omega^{F}_n$ and ${\omega}^I_n$ are the foot-end angular velocity derived from forward kinematics and the foot IMU angular velocity at time step $n$, respectively. $\mathbf{R}_I^F$ is the extrinsic rotation matrix for the spatial calibration.

Furthermore, the base state correction of a legged robot necessitates foot IMU measurements to be combined with joint encoders through forward kinematics. However, even with the same clock source, due to information transmission delays and sampling errors, there inevitably exists a time offset between the IMU and the joint encoder timestamps, which is given by
\begin{equation}
t_d = t_I - t_E,
\end{equation}
where $t_I$ and $t_E$ are the received sampling timestamps of the foot IMU and the joint encoder. $t_d$ is the time offset between them. This time offset can lead to errors in the body state correction by the foot IMU. Therefore it needs to be calibrated in the multi-IMU system.


\begin{figure*}
    \centering
    \includegraphics[width=0.9\linewidth]{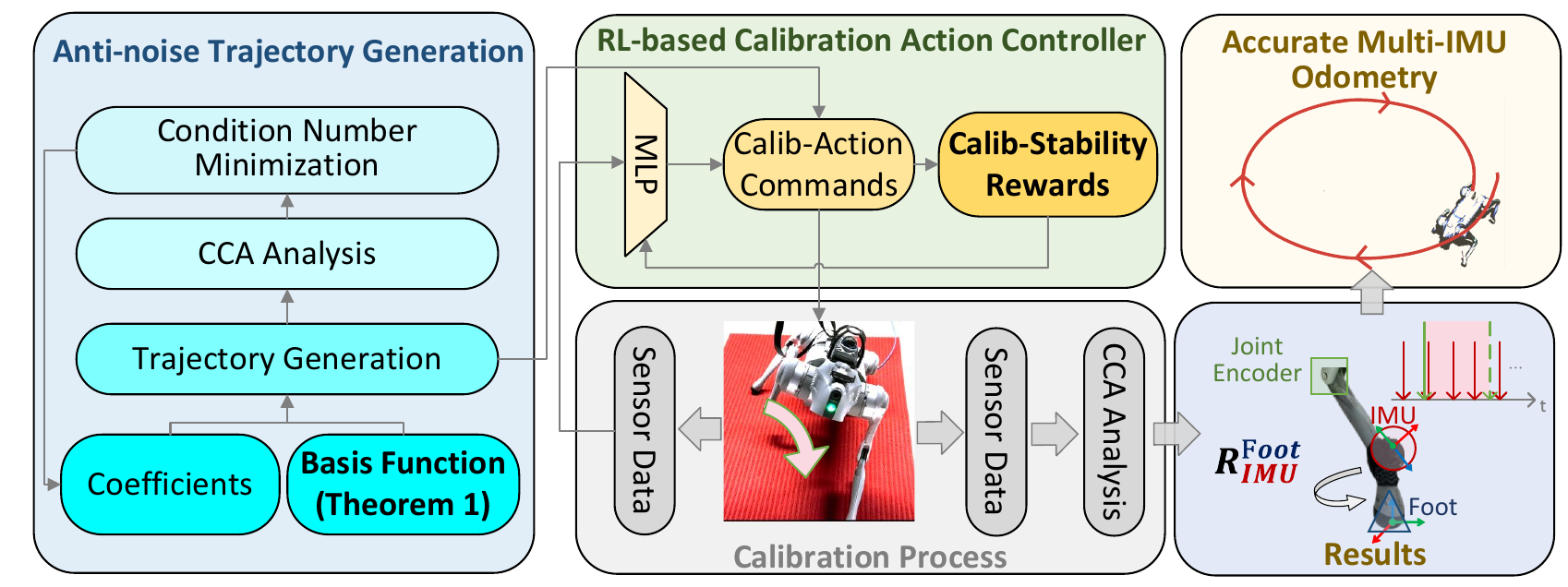}
    \caption{System overview of A$^2$I-Calib. Firstly, the anti-noise trajectory generation module generates and optimizes leg trajectories that minimizes the condition number in the legged robot, based on the proposed basis functions. Secondly, In order to perform the ideally generated calibration actions on the ground for the legged robot, the RL-based calibration action controller is introduced. This module achieves robust execution of anti-noise calibration actions. The rewards for lifting a single leg and combining it with the flexibility of the calibration actions are adopted in the RL training. Thirdly, the generated calibration commands are implemented into a quadruped robot and canonical correlation analysis (CCA) is conducted to calibrate the foot IMU. Finally, The calibration results, including the external rotation matrix and time offset, are then input into the multi-IMU odometry.}
    \label{fig:framework}
\end{figure*}
\subsection{Noise Sensitivity in Foot IMU Calibration}
In multi-IMU legged robot systems, foot IMU nodes typically use low-cost IMUs with relatively large noise. However, during the calibration process, the inherent Gaussian white noise present in the foot IMU can affect the calibration accuracy. The spatial-temporal calibration process between multiple nodes generally uses the CCA framework proposed in \cite{qiu2021}. In legged robots, this process involves performing a correlation analysis between the foot IMU's angular velocity sequence shifted by $t_d$ and the theoretical foot-end angular velocity calculated using forward kinematics from the joint encoders. The auto-covariance matrix and cross-covariance matrix of these two sequences are given by

\begin{align}
    ^{t_d}\Sigma_{II} & \approx \frac{1}{N-1} \sum_{n=1}^N\left(^{t_d}\omega^I_n-\overline{\boldsymbol{^{t_d}\omega^I}}\right)\left(\boldsymbol{^{t_d}\omega}^I_n-\overline{\boldsymbol{^{t_d}\omega}^I}\right)^T, 
    \\
    \Sigma_{FF}  & \approx \frac{1}{N-1} \sum_{n=1}^N\left(\omega^F_n-
\overline{\boldsymbol{\omega^F}}\right)\left(\boldsymbol{\omega}^F_n-\overline{\boldsymbol{\omega}^F}\right)^T, \label{Eq:FF}
    \\
    ^{t_d}\Sigma_{IF} & \approx \frac{1}{N-1} \sum_{n=1}^N\left(^{t_d}\omega^I_n-\overline{\boldsymbol{^{t_d}\omega^I}}\right)\left(\boldsymbol{\omega}^F_n-\overline{\boldsymbol{\omega}^F}\right)^T,
    \\
    ^{t_d}\Sigma_{FI} &= ^{t_d}\Sigma^T_{IF},
\end{align}
where $^{t_d}\omega^I_n$ is the foot IMU angular velocity shifted by $t_d$. $^{t_d}\Sigma_{II}, \Sigma_{FF}, ^{t_d}\Sigma_{FI}$ and $^{t_d}\Sigma_{IF}$ are the auto-covariance matrix of foot IMU angular velocity shifted by $t_d$, auto-covariance matrix of theoretical foot-end angular velocity, and cross-covariance matrixes of these two angular velocity sequences shifted by $t_d$, respectively. Then, one can deduce the trace correlation coefficient $r(^{t_d}\omega^I_n,\omega^F_n)$ of these two sequences by
\begin{equation}
    r(^{t_d}\omega^I_n,\omega^F_n)=\sqrt{\frac{1}{3} \operatorname{Tr}\left( {^{t_d}\Sigma_{II}^{-1}}  {^{t_d}\Sigma_{IF}} \Sigma_{FF}^{-1} {^{t_d}\Sigma_{FI}}\right)}.
\end{equation}
Details may refer to \cite{qiu2021}.
Furthermore, the time offset can be calibrated by the maximization of $r(^{t_d}\omega^I_n,\omega^F_n)$, as expressed by 
\begin{equation}
    t_d = \arg \max _{t_{d} \in \tau}  r(^{t_d}\omega^I_n,\omega^F_n).
\end{equation}

After the temporal calibration, the external rotation matrix $\mathbf{R}_I^F$ can be derived from the singular value decomposition of $\Sigma_{FF}^{-1}{ ^{t_d}\Sigma_{FI}}$, which is given as
\begin{equation}
    \Sigma_{FF}^{-1}{ ^{t_d}\Sigma_{FI}} = U \Sigma V^T,
\end{equation}
\begin{equation}
    \mathbf{R}_I^F=\left(\boldsymbol{U}\left(\begin{array}{ccc}
1 & 0 & 0 \\
0 & 1 & 0 \\
0 & 0 & \operatorname{det}\left(\boldsymbol{U} \boldsymbol{V}^T\right)
\end{array}\right) \boldsymbol{V}^T\right)^{-1}.
\end{equation}

Eq. (9) yields that the condition number of $\Sigma_{FF}$ determines the ill-conditioned level of this calibration process, see \cite{golub2013, CN}.
Therefore, the noise sensitivity is described by the condition number of $\Sigma_{FF}$ in this paper. The goal of the proposed A$^2$I-Calib is to minimize this condition number, as formulated by
\begin{equation}
    \{\mathbf{a}_0,\mathbf{a}_1, ..., \mathbf{a}_N\} = \arg \min _{\mathbf{a}_{t} \in \mathcal{A} }  \kappa(\Sigma_{FF}),
\end{equation}
where $\{\mathbf{a}_0,\mathbf{a}_1, ..., \mathbf{a}_N\}$ are active calibration action sequences. $\kappa(\Sigma_{FF})$ denotes the condition number of $\Sigma_{FF}$. $\mathcal{A}$ is the action space of the legged robot.

\section{METHODOLOGY}
In this section, we first introduce A$^2$I-Calib, an anti-noise active multi-IMU spatial-temporal calibration framework for legged robots in multi-IMU systems. Then the anti-noise calibration motion trajectory optimization algorithm and RL-based calibration action controller are presented. Finally, the implementation settings of A$^2$I-Calib on quadruped robots are provided.

\subsection{System Overview}
A$^2$I-Calib, our anti-noise active multi-IMU spatial-temporal calibration framework is illustrated in Fig. \ref{fig:framework}. Firstly, to reduce the noise sensitivity during the calibration process of the legged robots, the anti-noise trajectory generation module is designed. This module generates and optimizes motion trajectories that minimize the condition number of the foot-end auto-covariance matrix $\kappa(\Sigma_{FF})$, based on the proposed basis function selection theorem. In order to perform the ideally generated calibration actions on the ground for the legged robot, the RL-based calibration action controller is introduced. This module achieves stable execution of anti-noise calibration actions autonomously. The rewards for lifting a single leg and combining it with the flexibility of the calibration actions are designed in the RL training. Next, the generated calibration motion commands are implemented into a quadruped robot and the CCA process is conducted to calibrate the foot IMU. The calibration results, including the external rotation matrix $\mathbf{R}_I^F$ and time offset $t_d$, are then input into the multi-IMU odometry. Finally, the accuracy of the odometry is used to verify the effectiveness of the calibration.
\begin{algorithm}[t]
\caption{Anti-noise Referenced Trajectory Generation}
\label{alg:acmp}
\begin{algorithmic}[1]
\Require 
\Statex IMU frequency $f_{\mathrm{IMU}}$
\Statex Sensor time offset range $t_r$
\Ensure 
\Statex Joint angle trajectories ${\theta}^{\text{hip}}_n$, ${\theta}^{\text{th}}_n$, ${\theta}^{\text{calf}}_n$

\State Initialize:
\State $f \gets \pi/(4t_r)$ 
\State $T \gets 8t_r$ 
\State $\lambda \gets \{i/f_{\mathrm{IMU}} \mid i = 0,1,\ldots,T \cdot f_{\mathrm{IMU}}\}$

\For{$i \leq i_{\max}$}
    \State Compute joint angles:
    \State $\dot{\theta}^{\text{hip}}_n \gets \sum_{k=1}^{N} A_k \sin(kfn), \quad \forall n \in \lambda$
    
    \State Compute angular velocities:
    \State $\dot{\theta}^{\text{th}}_n + \dot{\theta}^{\text{calf}}_n \gets \sum_{k=1}^{N} B_k \cos(kfn), \quad \forall n \in \lambda$
    
    \State Integrate joint motions:
    \State ${\theta}^{\text{idx}}_n = \Sigma \dot{\theta}^{\text{idx}}_n, \quad \text{idx} = \text{hip},\text{th},\text{calf}$
    
    \State Compute foot end angular velocity:
    \State $\omega^{F}_n \gets f_{\mathrm{FK}}({\theta}_n, \dot{\theta}_n)$
    
    \If{$\kappa(\Sigma_{\mathrm{FF}}) < \kappa_{\mathrm{obj}}$ \textbf{and} $\theta^\text{idx}_n \in [\theta^{\text{idx}}_b, \theta^{\text{idx}}_u]$}
        \State \Return $\theta^{\text{hip}}_n, \theta^{\text{th}}_n, \theta^{\text{calf}}_n$
    \Else
        \State Compute motion ranges:
        \State $\Delta\theta^\text{idx} \gets \max(\theta^\text{idx}_n) - \min(\theta^\text{idx}_n),$
        
        \State Evaluate constraints:
        \State $w^\text{idx} \gets \begin{cases}
            0 & \text{if } \theta^\text{idx}_n \in [\theta_{b}^\text{idx}, \theta_u^\text{idx}] \\
            1 & \text{otherwise}
        \end{cases}$
        
        \State Compute loss and gradients:
        \State $\mathcal{L}, \nabla_{A_k}, \nabla_{B_k} \gets \mathrm{Loss}(\mathbf{A},\mathbf{B})$
        
        \State Update parameters:
        \State $A_k \gets A_k + \alpha \nabla_{A_k}$
        \State $B_k \gets B_k + \alpha \nabla_{B_k}$ 
    \EndIf
\EndFor
\State \Return $\theta^\text{hip}_n, \theta^\text{th}_n, \theta^\text{calf}_n$
\end{algorithmic}
\end{algorithm}

\subsection{Anti-noise Trajectory Generation}
Unlike traditional single-rigid-body multi-node sensor systems, the foot-end nodes of legged robots are not calibrated through direct sensor odometry observations. Instead, legged robots need to deduce the angular velocity sequence of the foot-end using forward kinematics with joint encoders. The regular foot-end motion of the quadruped robot is often insufficiently excited, which results in a large condition number, thereby enlarging the noise sensitivity for calibration. To tackle these issues, this anti-noise trajectory generation module optimizes the foot-end calibration motion based on the proposed basis function selection theorem, minimizing the condition number to its theoretical minimum. Therefore, it can significantly reduce the calibration algorithm's sensitivity to IMU noise. The detailed description of the trajectory generation algorithm design is given as follows.

A typical leg of a legged robot has three joints: the hip joint, the thigh joint, and the calf joint. Taking Unitree Go2 quadruped robot as a typical example, the link twist angles of one leg are defined by modified Denavit-Hartenberg parameters as ${\alpha}^{hip}={\alpha}^{calf}={\alpha}^{foot}=0, {\alpha}^{th}=-90^{\circ}$.
By controlling the angular velocity of each joint, regardless of the floating base, one can deduce the foot-end angular velocities by forward kinematics\cite{FK}, as given by  
\begin{equation}
\omega_{n}^F=\left(\begin{array}{c}
-\dot{\theta}_n^{\text{hip}} \cdot \sin \left(\theta^{\text{th}}_{n}+\theta^{\text{calf}}_{n}\right) \\
-\dot{\theta}_n^{\text{hip}} \cdot \cos \left(\theta^{\text{th}}_{n}+\theta^{\text{calf}}_{n}\right) \\
\dot{\theta}^{\text{th}}_{n}+\dot{\theta}^{\text{calf}}_{n}
\end{array}\right) \triangleq \left(\begin{array}{c}
\omega^x_{n} \\
\omega^y_{n} \\
\omega^z_{n}
\end{array}\right),
\label{Eq:FK}
\end{equation}
where $\dot{\theta}_n^{\text{hip}}$, $\dot{\theta}_n^{\text{th}}$ and $\dot{\theta}_n^{\text{calf}}$ are the angular velocities control commands for the hip joint, thigh joint, and calf joint, respectively. $\omega^x_{n}$, $\omega^y_{n}$ and $\omega^z_{n}$ are the angular velocity along with x, y and z axis, respectively.

In order to generate the optimal foot end trajectory to minimize the condition number of $\Sigma_{FF}$, theorem \ref{th:1} for basis function selection is proposed to simplify the problem.
\begin{theorem}
During an integer number of calibration periods $(0, NT)$, the auto-covariance matrix of theoretical foot-end angular velocity $\Sigma_{FF}$ is a diagonal matrix so that the condition number is significantly reduced\cite{CN} when the hip joint angular velocity $\dot{\theta}^{\text{hip}}_n$ is a linear combination of functions from the sine function set $\boldsymbol{S} = \{\sin(k f n)\}$, and the sum of the thigh joint and calf joint angular velocity $\dot{\theta}^{\text{th}}_n + \dot{\theta}^{\text{calf}}_n$ is a linear combination of functions from the cosine function set $\boldsymbol{C} = \{\cos(k f n)\}$; that is
\begin{align}
\dot{\theta}^{\text{hip}}_n &= \sum_{k=1}^{N} A_k \sin(k f n), \label{Eq:Set1}\\
\dot{\theta}^{\text{th}}_n + \dot{\theta}^{\text{calf}}_n &= \sum_{k=1}^{N} B_k \cos(k f n), \label{Eq:Set2}
\end{align}
where $N$ is the control sequence length. $f$ is the sampling frequency. $A_k$ and $B_k$ are the coefficients of the basis functions. 
\label{th:1}
\end{theorem}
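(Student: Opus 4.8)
The plan is to substitute the trigonometric parametrizations \eqref{Eq:Set1}–\eqref{Eq:Set2} into the forward-kinematics expression \eqref{Eq:FK} for $\omega_n^F = (\omega_n^x, \omega_n^y, \omega_n^z)^T$, then show that the three off-diagonal entries of $\Sigma_{FF}$ — namely the empirical cross-covariances of the pairs $(\omega^x,\omega^y)$, $(\omega^x,\omega^z)$, and $(\omega^y,\omega^z)$ — vanish when the averaging is performed over an integer number of periods $NT$. First I would observe that, since $\dot\theta^{\text{th}}_n + \dot\theta^{\text{calf}}_n = \sum_k B_k \cos(kfn)$ has zero mean over a full period, its integral $\theta^{\text{th}}_n + \theta^{\text{calf}}_n$ is itself a linear combination of $\{\sin(kfn)\}$ plus a constant phase term; write $\phi_n := \theta^{\text{th}}_n+\theta^{\text{calf}}_n$. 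Then $\omega_n^x = -\dot\theta^{\text{hip}}_n \sin\phi_n$, $\omega_n^y = -\dot\theta^{\text{hip}}_n\cos\phi_n$, and $\omega_n^z = \dot\theta^{\text{th}}_n + \dot\theta^{\text{calf}}_n$, and the claim reduces to three orthogonality statements over the sampling grid $\lambda$.

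The key steps, in order: (1) establish that each of $\omega^x$, $\omega^y$, $\omega^z$ has zero empirical mean over the integer number of periods, so that $\Sigma_{FF}$ is just $\frac{1}{N-1}\sum_n \omega_n^F (\omega_n^F)^T$ without mean-subtraction corrections — this follows from the discrete orthogonality of $\{\sin(kfn)\}$ and $\{\cos(kfn)\}$ against the constant function on a full-period grid; (2) show $\sum_n \omega_n^x\,\omega_n^z = 0$ and $\sum_n \omega_n^y\,\omega_n^z = 0$ by expanding the products and using that $\dot\theta^{\text{hip}}_n\sin\phi_n$ and $\dot\theta^{\text{hip}}_n\cos\phi_n$, once written via product-to-sum identities, contain only sinusoids at frequencies $kf$ that are orthogonal to the $\{\cos(kfn)\}$ appearing in $\omega^z$ over the grid $\lambda$; (3) show $\sum_n \omega_n^x\,\omega_n^y = \sum_n (\dot\theta^{\text{hip}}_n)^2 \sin\phi_n\cos\phi_n = \tfrac12\sum_n (\dot\theta^{\text{hip}}_n)^2\sin(2\phi_n)$, and argue this vanishes by an oddness/symmetry argument about the midpoint of the period (the integrand is odd under $n \mapsto NT - n$ when the basis functions are chosen as pure sines for $\dot\theta^{\text{hip}}$ and pure cosines for $\dot\theta^{\text{th}}+\dot\theta^{\text{calf}}$); (4) conclude that $\Sigma_{FF}=\mathrm{diag}(\sigma_x^2,\sigma_y^2,\sigma_z^2)$, whose condition number is $\max_i\sigma_i^2/\min_i\sigma_i^2$, a dramatic reduction from the generic ill-conditioned case, and invoke \cite{CN} for the sensitivity consequence.

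The main obstacle I expect is step (3): unlike steps (1)–(2), which are clean linear orthogonality facts, the $(\omega^x,\omega^y)$ cross term is genuinely quadratic in $\dot\theta^{\text{hip}}$ and involves $\sin(2\phi_n)$ with $\phi_n$ itself a sum of sinusoids, so naive frequency-counting does not immediately close it. The resolution should be the discrete time-reversal symmetry $n \leftrightarrow NT - n$: under this reflection, every $\sin(kfn)$ is antisymmetric about the period endpoints while every $\cos(kfn)$ is symmetric, hence $\dot\theta^{\text{hip}}$ is odd, $\phi_n$ is odd, $\sin(2\phi_n)$ is odd, $(\dot\theta^{\text{hip}}_n)^2$ is even, and the full summand is odd — so the sum over a symmetric full-period grid telescopes to zero (modulo a careful check of the endpoint/midpoint samples on the discrete grid $\lambda$, which is where I would be most careful). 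A secondary subtlety is making precise the phrase "the condition number is significantly reduced": I would state it as $\Sigma_{FF}$ being exactly diagonal (so $\kappa$ equals the ratio of its extreme diagonal entries, controllable via the coefficients $A_k, B_k$), rather than claiming $\kappa = 1$ outright, since the three axis variances need not be equal.
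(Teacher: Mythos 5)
Your overall route coincides with the paper's: substitute the parametrizations (\ref{Eq:Set1})--(\ref{Eq:Set2}) into the forward kinematics of Eq. (\ref{Eq:FK}) and make the three off-diagonal covariances vanish by trigonometric orthogonality and parity over an integer number of periods (the paper, after setting $\dot{\theta}^{\text{calf}}_n=0$ and zero initial joint angles, simply asserts the analogues of your steps as Eq. (\ref{Eq:mean})--(\ref{Eq:orth})). Your step (3) parity argument for the $(\omega^x,\omega^y)$ term is correct and supplies exactly the justification the paper leaves implicit, provided you also fix the integration constant so that $\phi_n$ has no constant offset (the paper does this by taking zero initial joint angles; with the ``constant phase term'' you allow, the oddness of $\phi_n$, and hence both your step (3) and the $(\omega^y,\omega^z)$ cancellation, would break).

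There are, however, two genuine gaps. First, in step (1), $\overline{\omega^x}$ does not vanish: $\omega^x_n=-\dot{\theta}^{\text{hip}}_n\sin\phi_n$ is a product of two odd (sine-type) series, hence an even function whose Fourier expansion is a cosine series that generally contains a DC term; e.g., with only $A_1,B_1$ nonzero and $B_1/f$ small, $\omega^x_n\approx -A_1(B_1/f)\sin^2(fn)$ has mean $-A_1B_1/(2f)\neq 0$. The paper accordingly claims only $\overline{\omega^y}=\overline{\omega^z}=0$ and keeps $\overline{(\omega^x)^2}-(\overline{\omega^x})^2$ on the diagonal; your replacement of $\Sigma_{FF}$ by the raw second-moment matrix is therefore incorrect, although diagonality itself survives because every off-diagonal pair contains $\omega^y$ or $\omega^z$. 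Second, and more seriously, your step (2) argument for $\overline{\omega^x\omega^z}=0$ fails: after product-to-sum expansion, $\dot{\theta}^{\text{hip}}_n\sin\phi_n$ contains cosines at frequencies $kf$, which are not orthogonal to the cosines constituting $\omega^z_n$, and parity does not rescue it since both $\omega^x$ and $\omega^z$ are even under $n\mapsto NT-n$. Indeed the identity is false for general coefficient mixes: taking $\dot{\theta}^{\text{hip}}_n=A_2\sin(2fn)$ and $\omega^z_n=B_1\cos(fn)$ gives, to first order in $B_1/f$, $\overline{\omega^x\omega^z}\approx-\tfrac{A_2B_1^{2}}{4f}\neq 0$ while $\overline{\omega^z}=0$, so this cross-covariance cannot be closed by the proposed frequency-counting. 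This is precisely the step the paper itself asserts without proof; it holds only under extra structure on the harmonic content (e.g., a single shared harmonic, or odd harmonics only), so as written your argument --- like the paper's --- does not establish the theorem in the stated generality.
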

\begin{proof}
For simplicity, $\dot{\theta}^{\text{calf}}_n$ is set to be zero for the proof of theorem 1. One can set all the initial values of the joint positions as 0, thereby 
\begin{align}
{\theta}^{\text{hip}}_n & =-\sum_{k=1}^{N} \frac{A_k}{k f} \cos (k f n), \\
{\theta}^{\text{th}}_n & =\sum_{k=1}^{N} \frac{B_k}{k f} \sin (k f n).
\end{align}
Then, one can deduce the foot-end angular velocities by Eq. (\ref{Eq:FK}), as given by
\begin{equation}
\omega^x_n = -\sum_{i=1}^{N} A_i \sin (i f n) \cdot \sin \left(\sum_{k=1}^{N} \frac{B_k}{k f} \sin (k f n)\right),
\end{equation}

\begin{equation}
\omega^y_n = -\sum_{i=1}^{N} A_i \sin (i f n) \cdot \cos \left(\sum_{k=1}^{N} \frac{B_k}{k f} \sin (k f n)\right),
\end{equation}

\begin{equation}
    \omega^z_n  = \sum_{k=1}^{N} B_k \cos(k f n),
\end{equation}
which yields to 
\begin{equation}
    \overline{\omega^y} = \overline{\omega^z} = 0, \label{Eq:mean}
\end{equation}
\begin{equation}
    \overline{\omega^x\omega^y} = \overline{\omega^x\omega^z} = \overline{\omega^y\omega^z} = 0. \label{Eq:orth}
\end{equation}

Furthermore, the auto-covariance matrix of theoretical foot-end angular velocity $\Sigma_{FF}$ can be calculated and simplified using Eq. (\ref{Eq:mean}), (\ref{Eq:orth}), and (\ref{Eq:FF}), which is expressed by

\begin{equation}
\Sigma_{F F} \approx\left(\begin{array}{ccc}
\overline{(\omega^x)^2}-(\bar{\omega^x})^2 & 0 & 0 \\
0 & \overline{(\omega^y)^2}-(\bar{\omega^y})^2 & 0 \\
0 & 0 & \overline{(\omega^z)^2}-(\bar{\omega^z})^2
\end{array}\right).
\end{equation}

Therefore, $\Sigma_{F F}$ is a diagonal matrix.

\end{proof}

After the proper selection of basis function sets for controlling angular velocity, $\Sigma_{F F}$ is ensured to be a diagonal matrix. To further minimize the condition number of $\Sigma_{F F}$, the coefficients $A_k$ and $B_k$ in Eq. (\ref{Eq:Set1}) and (\ref{Eq:Set2}) are optimized to make $\Sigma_{FF}$ a scalar matrix, so that the condition number is theoretically minimum. The objective function is defined as

\begin{equation}
\begin{gathered}
L(\mathbf{A},\mathbf{B})=\kappa\left(\Sigma_{F F}\right)+w^\text{hip} \Theta^\text{hip} +w^\text{th} \Theta^\text{th}+w^\text{calf} \Theta^\text{calf}, \\
\end{gathered}
\end{equation}
where $\mathbf{A}$ and $\mathbf{B}$ are the coefficients of the basis function sets. $w^\text{hip} \Theta^\text{hip} +w^\text{th} \Theta^\text{th}+w^\text{calf} \Theta^\text{calf}$ is the penalty for the limitation of the joints movement, which is given by
\begin{equation}
\begin{gathered}
\Theta_\text{idx}=\max \left(\theta^\text{idx}_{n}\right)-\min \left(\theta^\text{idx}_{n}\right), \\
w^\text{idx}=0, \text { if } \theta^\text{idx}_{n} \in\left[\theta^\text{idx}_{b}, \theta^\text{idx}_{u}\right], \\
\text{idx} = \text{hip}, \text{thigh}, \text{calf},
\end{gathered}
\end{equation}
where $[\theta^\text{idx}_{b}, \theta^\text{idx}_{u}]$ represents the joint position limits. Then gradient descent method is applied to this problem to optimize the coefficients $\mathbf{A}$ and $\mathbf{B}$ to minimize the condition number of $\Sigma_{FF}$. To conclude, our anti-noise referenced trajectory generation algorithm is illustrated in Algorithm \ref{alg:acmp}.

\subsection{RL-based Calibration Action Controller}

To enable the quadruped robot to autonomously perform the proposed active calibration actions, reinforcement learning is employed to ensure stability and smooth movement. The input observation, action space and reward design of the RL-based calibration action controller is presented as follows.


\subsubsection{Observation}

The input observation of the RL model includes 49 dimensions as listed in Table \ref{tab:OBS-Act}, which consists of the robot's proprioceptive information and high-level commands. 

\subsubsection{Action Space}

The action $\mathbf{a}_t$ generated by the policy networks consists of the target deviations from last positions for the twelve joints $\Delta \mathbf{q}_j^* = \{\Delta q_j\}$, where $q_j$ is the joint position of a certain joint.

\begin{table}[]
    \centering
    \caption{Observation and Action Term Summary}
    \renewcommand\arraystretch{1.3}
    \belowrulesep=0pt
    \aboverulesep=0pt
\resizebox{0.48\textwidth}{!}{
\begin{tabular}{ccc}
\toprule
\textbf{Observation Term Name} & \textbf{Defination} & \textbf{Dimension} \\
\cmidrule{1-3}
 Base Linear Velocity & $\mathbf{v}_B$ & $\mathbb{R}^3$  \\
 Base Angular Velocity & $\mathbf{\omega}_B$ & $\mathbb{R}^3$  \\
 Base Projected Gravity & $\mathbf{g}_B$ & $\mathbb{R}^3$  \\
 Joint Positions & $\mathbf{q}_j$ & $\mathbb{R}^{12}$ \\
 Joint Velocities & $\dot{\mathbf{q}}_j$ & $\mathbb{R}^{12}$ \\
 Previous Actions & $\mathbf{a}_{t-1}$ & $\mathbb{R}^{12}$ \\
 Tracking Commands & $\mathbf{\omega}_{B}^*$ & $\mathbb{R}^3$ \\
 Lifting Command & $c_{lift} \in \{0, 1, 2, 3, 4\}$ & $\mathbb{R}^1$ \\
\hline
\textbf{Action Term Name} \\
\hline
 Deviations from Previous Actions & $\Delta \mathbf{q}_j^*$ & $\mathbb{R}^{12}$  \\
\bottomrule
\end{tabular}}
\label{tab:OBS-Act}
\end{table}

\begin{table}[]
    \centering
    \caption{Reward Term Summary}
    \renewcommand\arraystretch{1.3}
    \belowrulesep=0pt
    \aboverulesep=0pt
\resizebox{0.48\textwidth}{!}{
\begin{tabular}{ccc}
\toprule
\textbf{Reward Term Name} & \textbf{Defination} & \textbf{Weight} \\
\cmidrule{1-3}
\hline
 \textbf{Calibration Task Reward} \\
\hline
 Lifting Foot & $\left\|\mathbf{F}_{\text{Calib-foot}}\right\|==0$ & $b_{lift}$ \\
 Tracking Angular Velocities & $\exp (-\frac{\left\|\mathbf{\omega}_{B}^*-\mathbf{\omega}_ {B}\right\|}{0.25})$ & 2.0  \\
 Slip Feet & $\sum_{i \in\{\text {remaining-feet}\}}\left\|\mathbf{v}_f\right\|$ & -0.1  \\
\hline
 \textbf{Normalized Reward} \\
\hline
 Termination & $\left\|\mathbf{F}_{\text{base}}\right\|>1.0$  & -100\\
 Collision Contacts & $\sum_{i \in\{\text {other-parts}\}}\left(\left\|\mathbf{F}_i\right\|>1.0\right)$ 
 & -1.0 \\
 Base Height & $\left\|h_B - h_B^*\right\|$ & -10.0 \\
 Base Orientation & $\left\| \Omega_B \right\|$ & -1 \\
 Base Linear Velocities & $\left\| \mathbf{v}_B \right\|$ & -1 \\
 Joint Accelerations & $\sum{\left\| \ddot{\mathbf{q}}_j \right\|}$ & -2.5e-7 \\
 Joint Torques & $\sum{\left\| \mathbf{\tau}_j \right\|}$ & -1e-5 \\
 Action Rate & $\sum{\left\| \mathbf{a}_{t} - \mathbf{a}_{t-1}\right\|} $ & -0.01 \\

\bottomrule
\end{tabular}}
\label{tab:reward}
\end{table}

\subsubsection{Reward}

Inspired by the single leg manipulation skills developed in \cite{Tripod}, our designed rewards for the calibration actions consist of two main components: 
\begin{itemize}
    \item Calibration task rewards \(R_t\) that encourage the robot to lift one leg, maintain a tripod posture, and execute the calibration action; and
    \item Normalized rewards \(R_n\) that promote safety, robustness, and smooth movement. 
\end{itemize}

The overall reward function is defined as: $R = R_t + R_n$. A detailed breakdown of the reward construction is provided in Table \ref{tab:reward}.
As for calibration task rewards $R_t$, the first reward component is a binary indicator \(b_{lift} \in \{-1, 1\}\), which signals whether the target calibration foot is off the ground. A positive reward is granted if the target foot remains in the air, whereas a penalty is imposed if the foot fails to lift for the calibration process. The second reward component ensures the stability of the robot’s base, aiming to maintain the validity of the Theorem 1 during calibration. The third reward component measures the slip of the remaining feet in contact with the ground, promoting better tripod stability and minimizing unwanted movements.

As for the normalized rewards $R_n$, the first part of which imposes a penalty for any contact between body parts other than the feet, ensuring safety during the calibration process. The second part penalizes deviations in base height, orientation, linear velocities, joint angular accelerations, torques, and action rates. These penalties are designed to enhance the smoothness and aesthetic quality of the robot’s movements.

\subsection{Implementation of A$^2$I-Calib on Quadruped Robots}
The RL-based calibration action controller is trained based on the Proximal Policy Optimization (PPO) algorithm\cite{PPO}. The actor-critic networks based on Multi-Layer Perceptrons (MLP) are trained in the IsaacGym simulation environment\cite{Gym}. Then, the trained controller is deployed on the Unitree Go2 quadruped robot. For better robustness, quadruped robot's initial pose and joint positions are pre-randomized, allowing for flexible execution of autonomous calibration at any time. Additionally, Gaussian noise is added to observations to achieve better generalization. The training process is divided into two phases: 1) standing in the tripod posture, and 2) execution of autonomous calibration actions. The model is trained with an episode of 20 seconds and commands are resampled every 10 seconds for both stages. Each stage takes 15 minutes on NVIDIA GeForce GTX 1660 Ti for 1000 iterations, exhibiting fine efficiency. Finally, the policy is deployed on Unitree Go2 quadruped robot with a joystick to send command. The reinforcement learning policy and PD controller both run at 50Hz.

\begin{figure}[]
\centering
{\includegraphics[width=0.45 \linewidth] {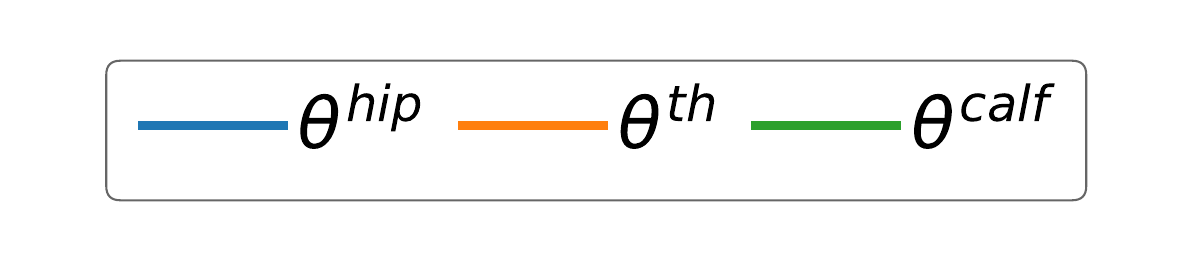}}\\%
\vspace{-0.4cm}
\subfloat[Joint positions in A$^2$I-Calib on Unitree Go2.]{\includegraphics[width=0.45 \linewidth]{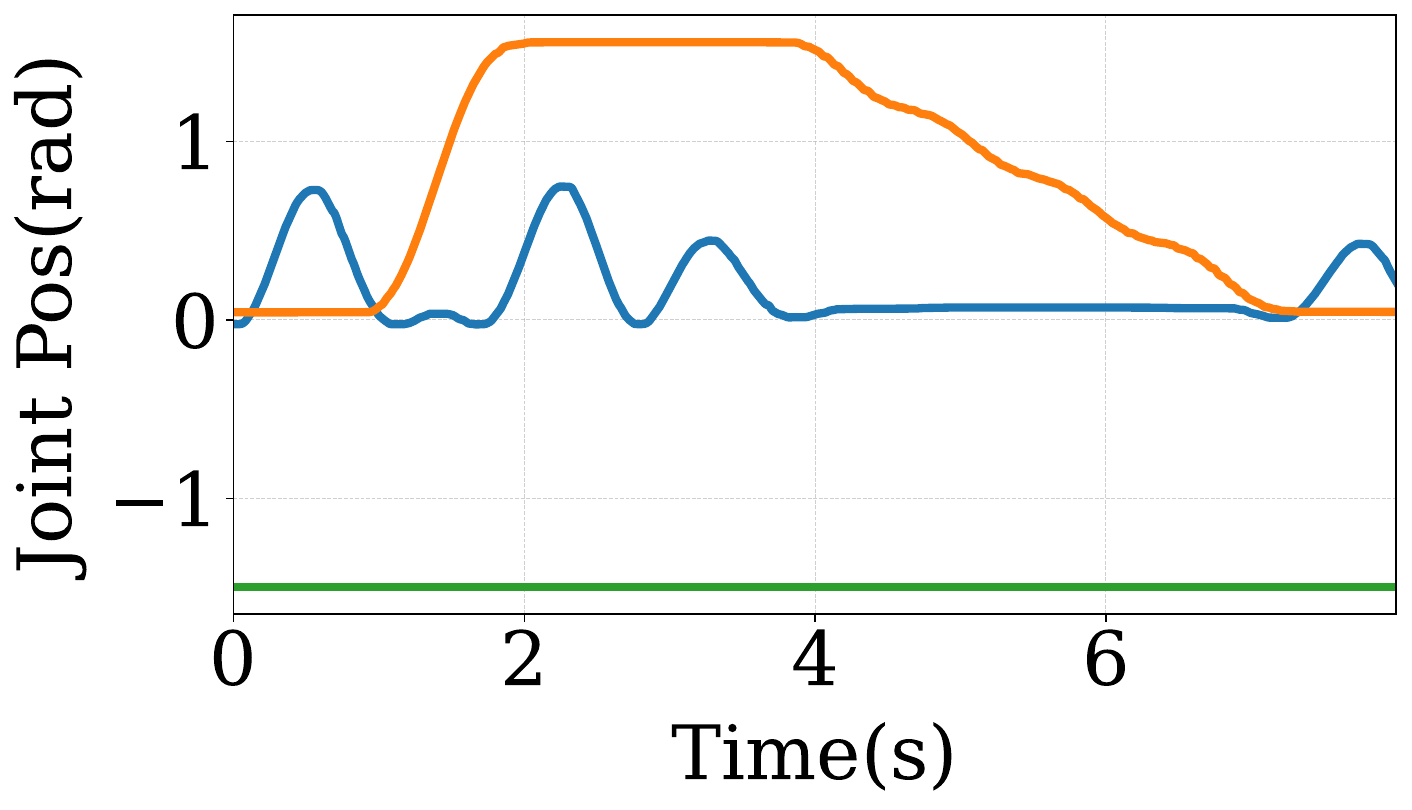}%
\label{fig_1_case}} \quad
\subfloat[Joint positions in walking gait on Unitree Go2.]{\includegraphics[width=0.45 \linewidth]{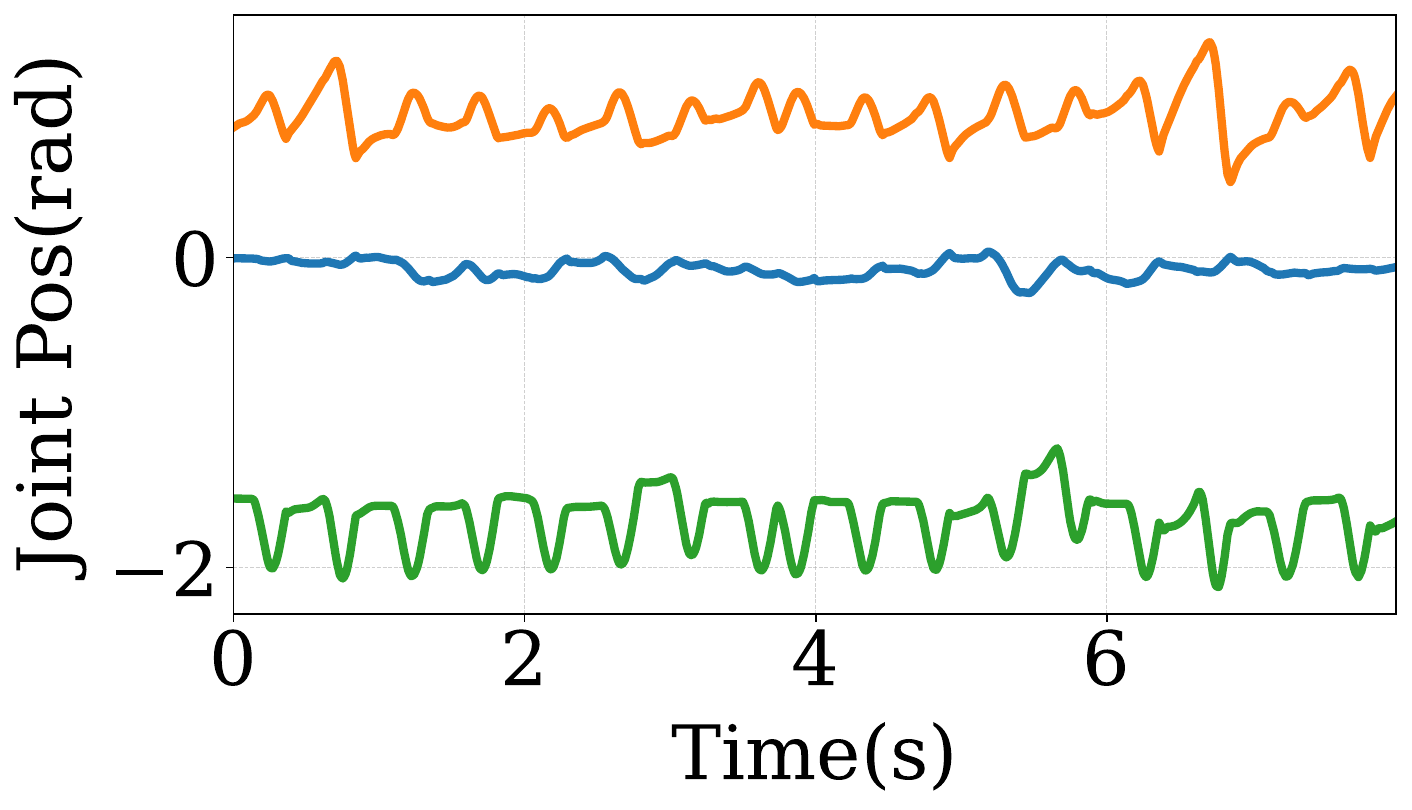}%
\label{fig_2_case}} \\
{\includegraphics[width=0.45 \linewidth] {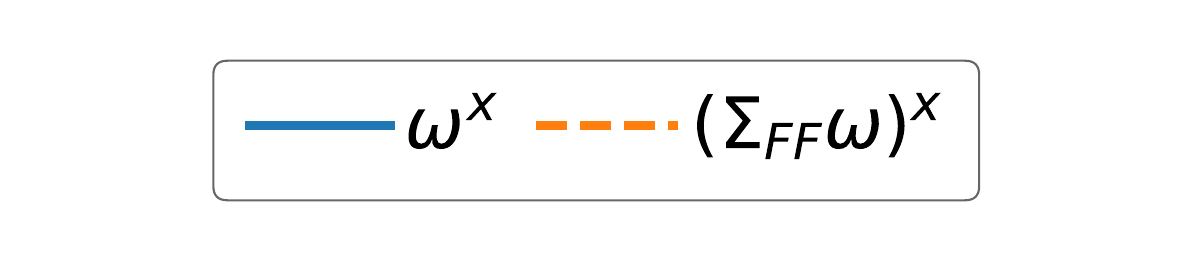}}\\%
\vspace{-0.4cm}
\subfloat[X-direction angular velocity affected by IMU noise in A$^2$I-Calib on Unitree Go2.]{\includegraphics[width=0.45 \linewidth]{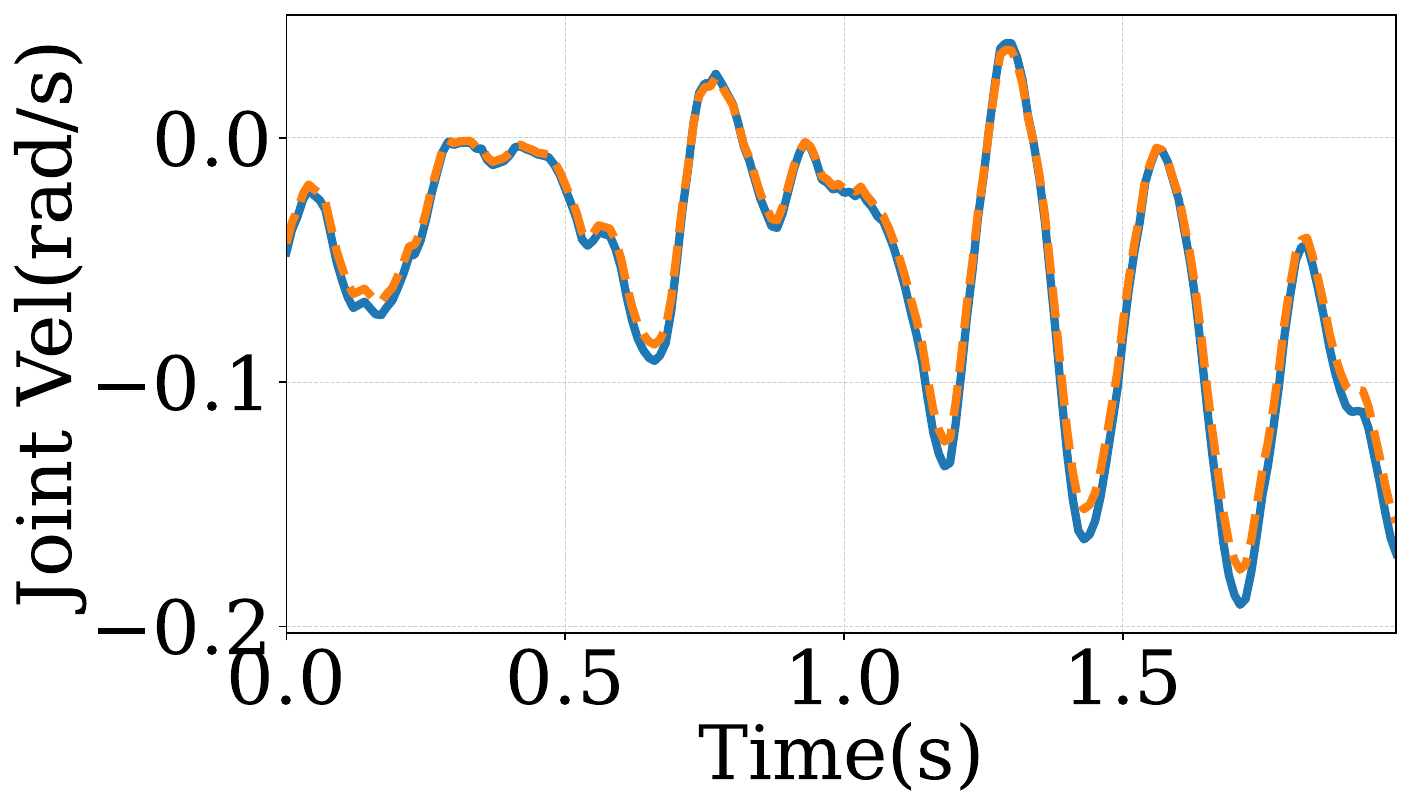}%
\label{fig_3_case}}  \quad
\subfloat[X-direction angular velocity affected by IMU noise in walking gait on Unitree Go2.]{\includegraphics[width=0.45 \linewidth]{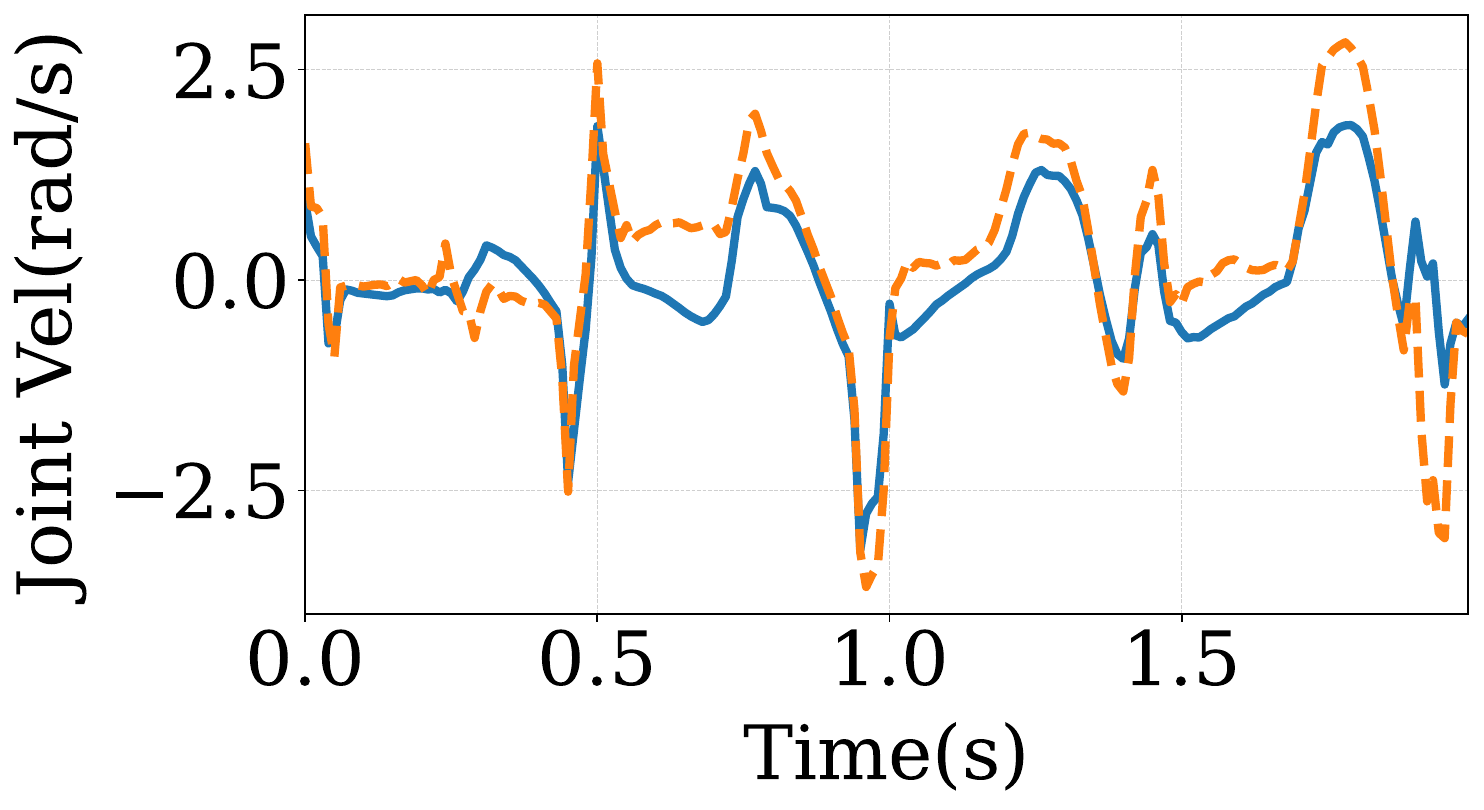}%
\label{fig_4_case}} \\
\subfloat[CN of A$^2$I-Calib VS. other gaits-based calibration in simulation.]{\includegraphics[width=1.0 \linewidth]{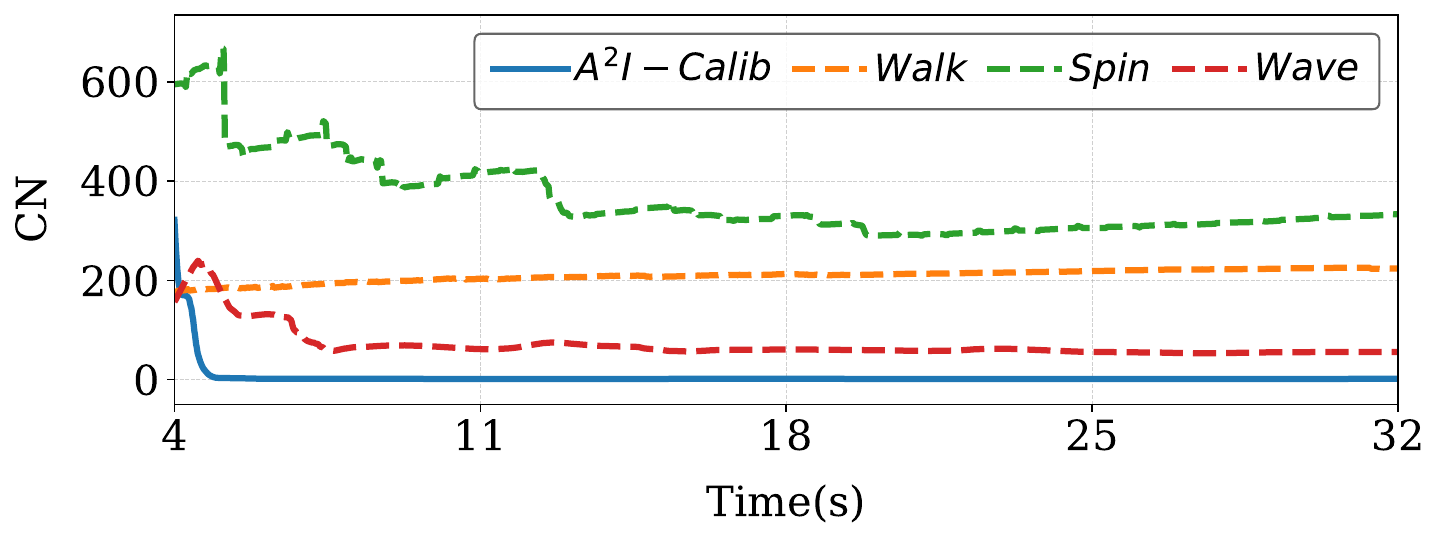}%
\label{fig_5_case}}
\caption{Simulation and real-world experiment results of joints positions, IMU's angular velocities and condition numbers.}
\label{fig:result}
\vspace{-0.2cm}
\end{figure}

\begin{table*}[]
    \centering
    \caption{Comparison of Condition Number, Correlation Coefficient and Rotation Error[${}^\circ$] with different calibration motion trajecrories.}
    \renewcommand\arraystretch{1.2}
    \belowrulesep=0pt
    \aboverulesep=0pt
\resizebox{0.9\textwidth}{!}{
\begin{tabular}{cccccccccccccc}
\toprule
\multirow{2}{*}{Node} & \multirow{2}{*}{Noise($^\circ/s/\sqrt{Hz}$)} & \multicolumn{3}{c}{Walk} & \multicolumn{3}{c}{Spin} & \multicolumn{3}{c}{Wave} & \multicolumn{3}{c}{\textbf{A$^2$I-Calib}} \\
\cmidrule(lr){3-5} \cmidrule(lr){6-8} \cmidrule(lr){9-11} \cmidrule(lr){12-14} & & CN & CC & RE & CN & CC & RE & CN & CC & RE & CN$\downarrow$ & CC$\uparrow$ & RE$\downarrow$ \\
\cmidrule{1-14}
\multirow{3}{*}{FL}
& 0.006 & 413.53 & 0.89 & 50.04 & 618.81 & 0.83 & 78.37 & 75.39 & 0.81 & 48.33 & \textbf{1.17} & \textbf{0.99} & \textbf{6.74} \\
& 0.03 & 217.54 & 0.93 & 29.28 & 442.97 & 0.83 & 71.20 & 72.52 & 0.80 & 56.50 & \textbf{1.23} & \textbf{0.99} & \textbf{7.35} \\
& 0.06 & 270.14 & 0.93 & 27.48 & 468.26 & 0.82 & 283.22 & 53.27 & 0.75 & 63.90 & \textbf{1.25} & \textbf{0.99} & \textbf{6.97} \\
\cmidrule{1-14}
\multirow{3}{*}{FR} 
& 0.006 & 345.46 & 0.92 & 62.53 & 99.70 & 0.89 & 26.27 & 52.70 & 0.67 & 238.61 & \textbf{1.25} & \textbf{0.99} & \textbf{1.03} \\
& 0.03 & 307.42 & 0.93 & 22.72 & 79.59 & 0.85 & 20.66 & 50.33 & 0.68 & 241.12 & \textbf{1.60} & \textbf{0.99} & \textbf{1.47} \\
& 0.06 & 326.18 & 0.93 & 38.03 & 88.65 & 0.82 & 41.41 & 55.44 & 0.65 & 236.92 & \textbf{1.52} & \textbf{0.99} & \textbf{1.41} \\
\cmidrule{1-14}
\multirow{3}{*}{RL} 
& 0.006 & 198.00 & 0.93 & 55.61 & 254.77 & 0.92 & 60.96 & 66.05 & 0.75 & 98.83 & \textbf{1.28} & \textbf{0.99} & \textbf{2.33} \\
& 0.03 & 182.10 & 0.93 & 38.24 & 193.00 & 0.92 & 8.89 & 52.83 & 0.71 & 76.60 & \textbf{1.51} & \textbf{0.99} & \textbf{2.02} \\
& 0.06 & 192.24 & 0.93 & 43.53 & 194.55 & 0.92 & 64.05 & 73.90 & 0.73 & 89.80 & \textbf{1.54} & \textbf{0.99} & \textbf{1.47} \\
\cmidrule{1-14}
\multirow{3}{*}{RR} 
& 0.006 & 125.23 & 0.94 & 74.99 & 82.91 & 0.91 & 75.46 & 23.08 & 0.80 & 16.74 & \textbf{1.21} & \textbf{0.99} & \textbf{2.04} \\
& 0.03 & 108.78 & 0.95 & 50.95 & 77.46 & 0.90 & 38.89 & 35.69 & 0.82 & 158.39 & \textbf{1.29} & \textbf{0.99} & \textbf{2.61}\\
& 0.06 & 115.97 & 0.95 & 62.94 & 75.37 & 0.90 & 91.07 & 26.09 & 0.81 & 55.20 & \textbf{1.37} & \textbf{0.99} & \textbf{1.44} \\

\bottomrule
\end{tabular}}
\label{tab:Gazebo_Calib}
\end{table*}

\begin{table}[]
    \centering
    \caption{Effect of Calibration Condition Number on Positioning Accuracy[m] on Gazebo platform.}
    \renewcommand\arraystretch{1.2}
    \belowrulesep=0pt
    \aboverulesep=0pt
\resizebox{0.48\textwidth}{!}{
\begin{tabular}{ccccccccccc}
\toprule
 \multirow{2}{*}{Noise} & \multicolumn{2}{c}{Walk} & \multicolumn{2}{c}{Spin} & \multicolumn{2}{c}{Wave} & \multicolumn{2}{c}{\textbf{A$^2$I-Calib}} \\
\cmidrule(lr){2-3} \cmidrule(lr){4-5} \cmidrule(lr){6-7} \cmidrule(lr){8-9} &  CN & APE & CN & APE & CN & APE  & CN$\downarrow$ & APE$\downarrow$  \\
\cmidrule{1-9}
0.006 & 270.56 & 0.26 &	264.05 & 0.89 & 54.31 &	0.26 & \textbf{1.23} &	\textbf{0.13} \\
0.03 & 203.96 & 0.19 & 198.26 & 0.19 & 52.84 & 0.55 & \textbf{1.41} &	\textbf{0.14}  \\ 
0.06 & 226.13 & 0.15 & 206.71 & 0.30 & 52.18 & 0.56 & \textbf{1.42} & \textbf{0.11} \\

\bottomrule
\end{tabular}}
\label{tab:MTR TTR}
\end{table}

\begin{table}[]
    \centering
    \caption{Condition Number, correlation coefficient and spatial calibration results on Unitree Go2.}
    \renewcommand\arraystretch{1.2}
    \belowrulesep=0pt
    \aboverulesep=0pt
\resizebox{1\linewidth}{!}{
\begin{tabular}{ccccccccc}
\toprule
\multirow{2}{*}{Node} & \multicolumn{3}{c}{Walk} & \multicolumn{3}{c}{\textbf{A$^2$I-Calib}} \\
\cmidrule(lr){2-4} \cmidrule(lr){5-7} & CN & CC & RPY & CN$\downarrow$ & CC$\uparrow$ & RPY \\
\cmidrule{1-7}
FL & 382.29 & 55.21 & (27, 10, 45) & \textbf{1.25} & \textbf{80.68} & \textbf{(104, 17, 21)} \\
FR & 541.94 & 54.52 & (162, -59, -140) & \textbf{1.32} & \textbf{80.67} & \textbf{(136, -11, 55)} \\
RL & 771.99 & 56.67 & (74, -35, -169) & \textbf{1.24} & \textbf{82.21} & \textbf{(56, -32, -129)} \\
RR & 978.50 & 56.99 & (-25, 29, -169) & \textbf{1.15} & \textbf{81.33} & \textbf{(92, -21, 115)} \\
\bottomrule
\end{tabular}}
\label{tab:MTR TTR}
\end{table}

\begin{figure}[]
\centering
\includegraphics[width=0.3\textwidth]{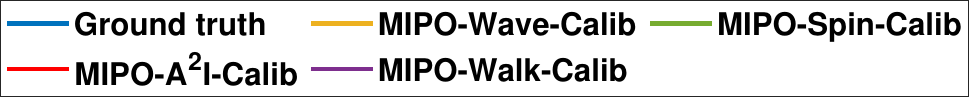} \\
\vspace{-0.2cm}
\subfloat[The estimated trajectories on the Gazebo sequence with noise level 0.06$^\circ/s/\sqrt{Hz}$.]{\includegraphics[width=0.42 \linewidth]{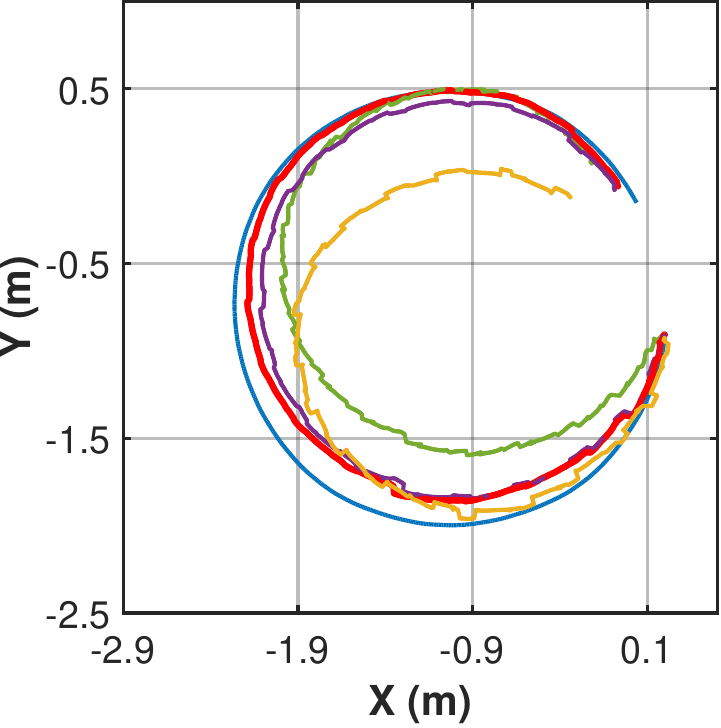}
\label{fig_1_case}}  \quad 
\subfloat[The estimated trajectories on the Gazebo sequence with noise level 0.006$^\circ/s/\sqrt{Hz}$.]{\includegraphics[width=0.42 \linewidth]{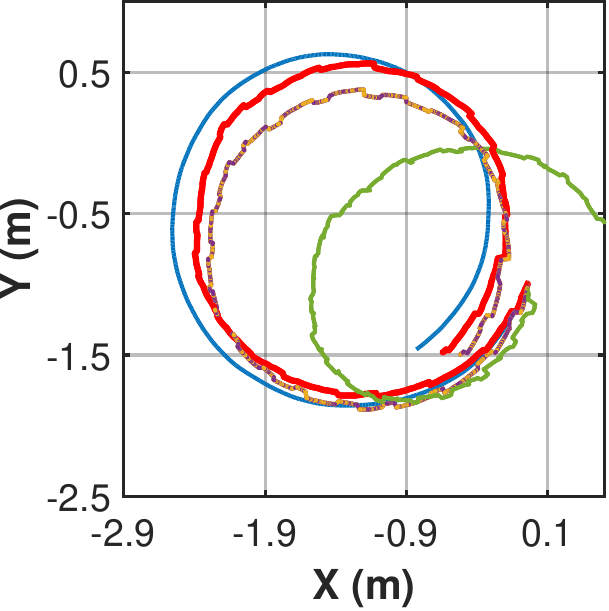}%
\label{fig_2_case}} \\

\caption{The estimated trajectories of MIPO with different calibration results in Gazebo.}
\label{fig:gazebo}
\vspace{-0.2cm}
\end{figure}

\section{EXPERIMENTS}
In this section, A$^2$I-Calib is evaluated against other naive calibration actions. First, we introduce the experimental setup, evaluation metrics, and platforms used for testing. Then, experiments are conducted on simulation and real-world legged robot platforms, focusing on two key aspects: 1) the noise sensitivity, and 2) the overall calibration accuracy of A$^2$I-Calib.
\subsection{Experimental Setup}
\subsubsection{Evaluation Metrics}
To evaluate the noise sensitivity and the overall calibration accuracy, we develop the following evaluation metrics:

\textit{Condition Number (CN)}: Condition number of the auto-covariance matrix for foot-end angular velocity is used to measure noise sensitivity of the calibration methods, as defined in Section III-C. Note that the ideal minimum CN equals to 1.

\textit{Correlation Coefficient (CC)}: Correlation coefficient of the foot IMU and joint encoder calculated foot-end angular velocity is used to assess the effectiveness of correlation affected by noise. Note that the ideal maximum CC equals to 1.

\textit{Rotation Error (RE)}: RE is defined to evaluate the rotation calibration error, as given by
\begin{equation}
    \text{RE}=\sqrt{\left|\gamma_x^*-\gamma_x^{\text {gt}}\right|^2+\left|\beta_y^*-\beta_y^{\text {gt}}\right|^2+\left|\alpha_z^*-\alpha_z^{\text {gt}}\right|^2},
\end{equation}
where $\gamma_x^*$, $\beta_y^*$ and $\alpha_z^*$ are the calibration results for the Euler angles from foot end to foot IMU. $\gamma_x^{\text {gt}}$, $\beta_y^{\text {gt}}$ and $\alpha_z^{\text{gt}}$ are the corresponding ground truth angles.


\textit{Absolute Positioning Error (APE)}: APE of multi-IMU proprioceptive odometry (MIPO) on the same sequence with different calibration results is used to reflect the impact of calibration accuracy on the overall odometry performance. The root mean square (RMS) of APE is adopted in this paper for comparison.

Note that during the experiment, the temporal calibration of all trajectory sequences is precise to ms level, which have limited impact on the multi-IMU system. As a result, the time offset calibration error is not listed as a core metric in this paper.

\subsubsection{Development of Experimental Platforms}
To demonstrate the preciseness and anti-noise capability of A$^2$I-Calib, experiments are conducted on simulation platform Gazebo with strict ground truth and real-world platform Unitree Go2 with multi-IMU settings.

\subsubsection{Compared Methods}
A$^2$I-Calib is compared against other existing gait trajectories with open-loop foot IMU calibration, including walking, spinning and naive single foot waving.

\begin{figure}
    \centering
\subfloat[Unitree Go2 Platform]{\includegraphics[width=0.45 \linewidth]{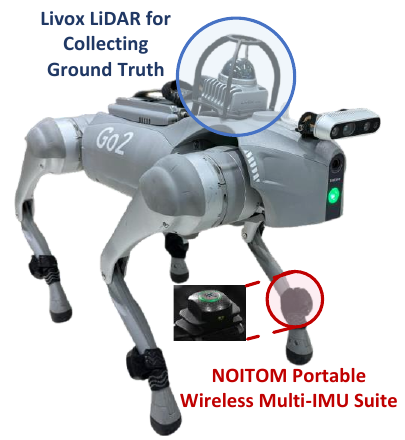}%
\label{fig_1_case}} \quad
\subfloat[The estimated trajectories of MIPO in the real world.]{\includegraphics[width=0.45 \linewidth]{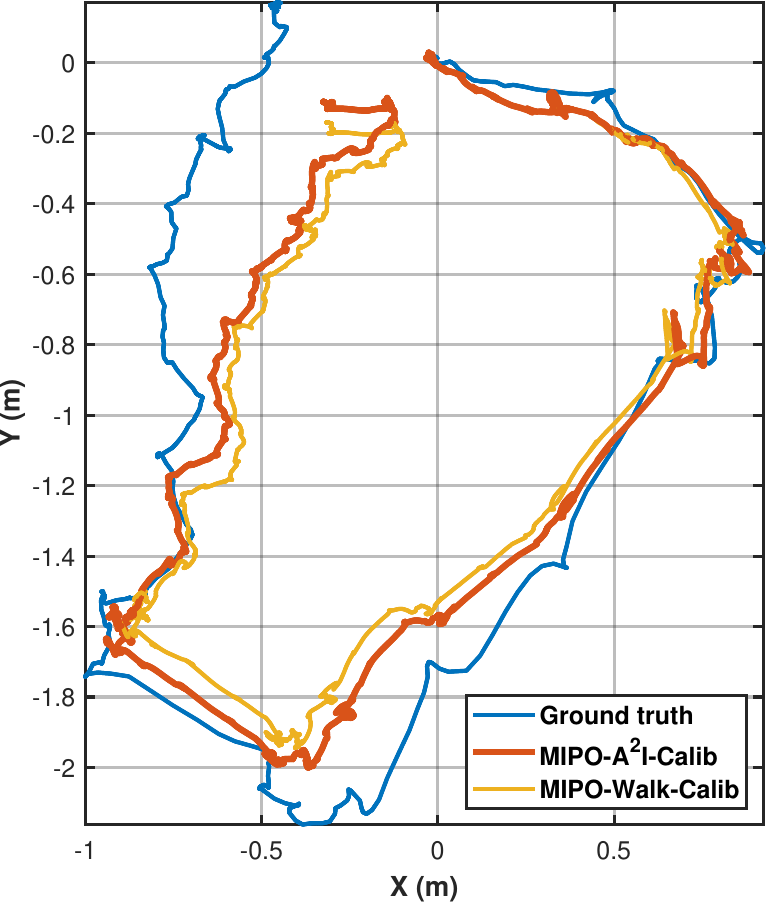}%
\label{fig_2_case}} \quad
    \caption{Experiments on the real-world quadruped robot platform, Unitree Go2. \textbf{(a):} The real-world multi-IMU system based on the Unitree Go2, equipped with Livox LiDAR for odometry ground truth collection and a NOITOM portable wireless multi-IMU suite. \textbf{(b):} The estimated trajectories of MIPO with different calibration results in real-world scenarios. The RMS of APE for MIPO-A$^2$I-Calib is 0.23 m, while for MIPO-WALK-Calib, it is 0.27 m, indicating a 15\% improvement in accuracy for MIPO-A$^2$I-Calib.}
    \label{fig:platform}
    \vspace{-0.5cm}
\end{figure}
\subsection{Experimental Results and Discussions}
Selective simulation and real-world experiment results of joints positions, IMU's angular velocities and condition numbers are shown in Fig. \ref{fig:result}. Detailed discussions as presented as follows.
\subsubsection{Simulation}
The simulation experiments evaluate the calibration results of four different gaits, including walking, spinning, leg waving, and proposed A$^2$I-Calib action, under three IMU noise levels: 0.006, 0.03, and 0.06$^\circ/s/\sqrt{Hz}$. The calibration results are then applied to assess MIPO performance. A$^2$I-Calib action achieves significant reductions in noise sensitivity compared to the other gaits' trajectories. With lower noise sensitivity A$^2$I-Calib thus improves rotational calibration accuracy. Furthermore, the APE of MIPO is reduced by 36.5\%, 72.4\%, and 72.2\% on sequences with different levels of IMU noise, respectively. Part of the estimated trajectories are shown in Fig. \ref{fig:gazebo} for comparison.

\subsubsection{Real-world Quadruped Robots}
Real-world experiments compare the calibration noise sensitivity of two different gaits, including walking and proposed A$^2$I-Calib action using Noitom IMUs. Fast-LIO\cite{Fast-LIO} serves as the ground truth to test the performance of MIPO utilizing different calibration results. A$^2$I-Calib action shows a reduction in noise sensitivity in comparison to walking gait, thus resulting in the reduction of the APE of MIPO by 15\%, as illustrated in Fig. \ref{fig:platform}.


\section{CONCLUSIONS}
This paper presents A$^2$I-Calib, an active noise-resistant calibration framework that enables fully autonomous spatial-temporal calibration of arbitrary foot-mounted IMUs. To minimize the noise sensitivity during the calibration process, an anti-noise trajectory generation module is designed. It employs a proposed basis function selection theorem to optimize condition numbers in correlation analysis. Then, a RL-based calibration action controller is proposed to ensure the robust execution of calibration-specific motions. Comprehensive validations in simulated and physical environments demonstrate the capability of A$^2$I-Calib to achieve high-precision calibration with wide-range IMU noise levels. Future works will be focused on the integration of other sensors in the active calibration framework to achieve fully autonomous calibration systems for multi-node legged robots.





\normalem
\bibliographystyle{IEEEtran}
\bibliography{ref}

\begin{thebibliography}{10}
\providecommand{\url}[1]{#1}
\csname url@samestyle\endcsname
\providecommand{\newblock}{\relax}
\providecommand{\bibinfo}[2]{#2}
\providecommand{\BIBentrySTDinterwordspacing}{\spaceskip=0pt\relax}
\providecommand{\BIBentryALTinterwordstretchfactor}{4}
\providecommand{\BIBentryALTinterwordspacing}{\spaceskip=\fontdimen2\font plus
\BIBentryALTinterwordstretchfactor\fontdimen3\font minus \fontdimen4\font\relax}
\providecommand{\BIBforeignlanguage}[2]{{%
\expandafter\ifx\csname l@#1\endcsname\relax
\typeout{** WARNING: IEEEtran.bst: No hyphenation pattern has been}%
\typeout{** loaded for the language `#1'. Using the pattern for}%
\typeout{** the default language instead.}%
\else
\language=\csname l@#1\endcsname
\fi
#2}}
\providecommand{\BIBdecl}{\relax}
\BIBdecl

\bibitem{MIT}
G.~Bledt, M.~J. Powell, B.~Katz, J.~Di~Carlo, P.~M. Wensing, and S.~Kim, ``Mit cheetah 3: Design and control of a robust, dynamic quadruped robot,'' in \emph{2018 IEEE/RSJ International Conference on Intelligent Robots and Systems (IROS)}, 2018, pp. 2245--2252.

\bibitem{Planning}
J.~Norby and A.~M. Johnson, ``Fast global motion planning for dynamic legged robots,'' in \emph{2020 IEEE/RSJ International Conference on Intelligent Robots and Systems (IROS)}, 2020, pp. 3829--3836.

\bibitem{MIMC-VINS}
K.~Eckenhoff, P.~Geneva, and G.~Huang, ``Mimc-vins: A versatile and resilient multi-imu multi-camera visual-inertial navigation system,'' \emph{IEEE Transactions on Robotics}, vol.~37, no.~5, pp. 1360--1380, 2021.

\bibitem{MIPH}
F.~E. Xavier, G.~Burger, M.~Pétriaux, J.-E. Deschaud, and F.~Goulette, ``Multi-imu proprioceptive state estimator for humanoid robots,'' in \emph{2023 IEEE/RSJ International Conference on Intelligent Robots and Systems (IROS)}, 2023, pp. 10\,880--10\,887.

\bibitem{MIPO}
S.~Yang, Z.~Zhang, B.~Bokser, and Z.~Manchester, ``Multi-imu proprioceptive odometry for legged robots,'' in \emph{2023 IEEE/RSJ International Conference on Intelligent Robots and Systems (IROS)}, 2023, pp. 774--779.

\bibitem{qiu2021}
K.~Qiu, T.~Qin, J.~Pan, S.~Liu, and S.~Shen, ``Real-time temporal and rotational calibration of heterogeneous sensors using motion correlation analysis,'' \emph{IEEE Transactions on Robotics}, vol.~37, no.~2, pp. 587--602, 2021.

\bibitem{iKalibr}
S.~Chen, X.~Li, S.~Li, Y.~Zhou, and X.~Yang, ``ikalibr: Unified targetless spatiotemporal calibration for resilient integrated inertial systems,'' \emph{IEEE Transactions on Robotics}, vol.~41, pp. 1618--1638, 2025.

\bibitem{EC-MI}
J.~Lee, D.~Hanley, and T.~Bretl, ``Extrinsic calibration of multiple inertial sensors from arbitrary trajectories,'' \emph{IEEE Robotics and Automation Letters}, vol.~7, no.~2, pp. 2055--2062, 2022.

\bibitem{OMIC}
J.~Hartzer and S.~Saripalli, ``Online multi-imu calibration using visual-inertial odometry,'' in \emph{2023 IEEE Symposium Sensor Data Fusion and International Conference on Multisensor Fusion and Integration (SDF-MFI)}, 2023, pp. 1--7.

\bibitem{ML-MI}
S.~Li, X.~Li, S.~Chen, Y.~Zhou, and S.~Wang, ``Targetless spatiotemporal calibration of multi-lidar multi-imu system based on continuous-time optimization,'' \emph{IEEE Transactions on Industrial Informatics}, vol.~20, no.~5, pp. 7565--7575, 2024.

\bibitem{E-Kalibr}
J.~Rehder, J.~Nikolic, T.~Schneider, T.~Hinzmann, and R.~Siegwart, ``Extending kalibr: Calibrating the extrinsics of multiple imus and of individual axes,'' in \emph{2016 IEEE International Conference on Robotics and Automation (ICRA)}, 2016, pp. 4304--4311.

\bibitem{Gait1}
X.~Liu, J.~Wu, Y.~Xue, C.~Qi, G.~Xin, and F.~Gao, ``Skill latent space based multigait learning for a legged robot,'' \emph{IEEE Transactions on Industrial Electronics}, vol.~72, no.~2, pp. 1743--1752, 2025.

\bibitem{Gait2}
X.~Cheng, K.~Shi, A.~Agarwal, and D.~Pathak, ``Extreme parkour with legged robots,'' \emph{arXiv preprint arXiv:2309.14341}, 2023.

\bibitem{golub2013}
G.~H. Golub and C.~F. Van~Loan, \emph{Matrix Computations}, 4th~ed.\hskip 1em plus 0.5em minus 0.4em\relax Baltimore, America: The Johns Hopkins University Press, 2013.

\bibitem{CN}
T.~Sauer, \emph{Numerical Analysis}, 2nd~ed.\hskip 1em plus 0.5em minus 0.4em\relax USA: Addison-Wesley Publishing Company, 2011.

\bibitem{FK}
F.~Merat, ``Introduction to robotics: Mechanics and control,'' \emph{IEEE Journal on Robotics and Automation}, vol.~3, no.~2, pp. 166--166, 1987.

\bibitem{Tripod}
P.~Arm, M.~Mittal, H.~Kolvenbach, and M.~Hutter, ``Pedipulate: Enabling manipulation skills using a quadruped robot’s leg,'' in \emph{2024 IEEE International Conference on Robotics and Automation (ICRA)}, 2024, pp. 5717--5723.

\bibitem{PPO}
\BIBentryALTinterwordspacing
J.~Schulman, F.~Wolski, P.~Dhariwal, A.~Radford, and O.~Klimov, ``Proximal policy optimization algorithms,'' 2017. [Online]. Available: \url{https://arxiv.org/abs/1707.06347}
\BIBentrySTDinterwordspacing

\bibitem{Gym}
\BIBentryALTinterwordspacing
N.~Rudin, D.~Hoeller, P.~Reist, and M.~Hutter, ``Learning to walk in minutes using massively parallel deep reinforcement learning,'' 2022. [Online]. Available: \url{https://arxiv.org/abs/2109.11978}
\BIBentrySTDinterwordspacing

\bibitem{Fast-LIO}
W.~Xu and F.~Zhang, ``Fast-lio: A fast, robust lidar-inertial odometry package by tightly-coupled iterated kalman filter,'' \emph{IEEE Robotics and Automation Letters}, vol.~6, no.~2, pp. 3317--3324, 2021.

\end{thebibliography}

\end{document}